\documentclass{article}

\usepackage[
  letterpaper,
  textwidth=5.5in,
  textheight=9in,
  centering
]{geometry}

\usepackage{times}
\usepackage[round,authoryear]{natbib}

\usepackage{amsmath,amssymb,amsthm,mathtools,bm}
\usepackage{graphicx}
\usepackage{booktabs,tabularx,multirow}
\usepackage{placeins}
\usepackage{enumitem}
\usepackage{microtype}
\usepackage{xcolor}
\usepackage{url}
\usepackage{float}
\usepackage[hidelinks]{hyperref}
\usepackage{fancyhdr}

\title{
  Activation-Flexible ANN-to-SNN Conversion\\
  with Finite-State Markov Neurons
}

\makeatletter
\renewcommand{\maketitle}{%
  \begin{center}

    {\LARGE\scshape \@title\par}

    \vspace{0.20in}

    {\large
      Ruiyu Jia$^{1,*}$
      \qquad
      Zhuo-Cheng Xiao$^{2,3,*}$
      \par
    }

    \vspace{0.10in}

    {\normalsize
      $^{1}$New York University Shanghai, Shanghai, China
      \par
    }

    \vspace{0.05in}

    {\normalsize
      $^{2}$NYU-ECNU Institute of Mathematical Sciences,\\
      New York University Shanghai, Shanghai 200124, China
      \par
    }

    \vspace{0.05in}

    {\normalsize
      $^{3}$NYU-ECNU Institute of Brain and Cognitive Science,\\
      New York University Shanghai, Shanghai 200124, China
      \par
    }

    \vspace{0.20in}

  \end{center}
}
\makeatother

\date{}

\newtheorem{theorem}{Theorem}
\newtheorem{corollary}{Corollary}
\newtheorem{remark}{Remark}

\newcommand{\ind}{\mathbf{1}}
\newcommand{\synops}{\operatorname{SynOps}}
\newcommand{\cliprelu}{\operatorname{ClipReLU}}
\newcommand{\E}{\mathbb{E}}
\newcommand{\Var}{\operatorname{Var}}
\newcommand{\gap}{\Delta_{\mathrm{acc}}}
\newcommand{\mf}{\mathrm{MF}}
\newcommand{\snn}{\mathrm{SNN}}
\newcommand{\ann}{\mathrm{ANN}}

\title{
  Activation-Flexible ANN-to-SNN Conversion\\
  with Finite-State Markov Neurons
}

\date{}

\begin{document}

\maketitle
\begingroup
\renewcommand{\thefootnote}{*}
\footnotetext{
  Corresponding authors:
  \texttt{rj2635@nyu.edu},
  \texttt{zx555@nyu.edu}.
}
\endgroup

\begin{abstract}
Most ANN-to-SNN conversion methods rely on a specific correspondence between the source activation and the spiking neuron dynamics. We propose a finite-state continuous-time Markov chain (CTMC) neuron framework whose stationary spike flux can approximate every continuous nonnegative monotone activation function on a compact interval. For a generalized CTMC family with affine input-dependent transitions, we prove uniform approximation to arbitrary accuracy over this function class and derive an explicit approximation error bound. In practice, two- and three-state CTMCs fit ReLU, sigmoid, softplus, and clipped ReLU on the evaluated input ranges, and we evaluate corresponding MLP conversions for each activation with layerwise rate scaling. Moderate clipping improves the conversion cost-accuracy tradeoff on the MNIST MLP and reduces SynOps by 27\% on VGG-11/MNIST at matched ANN-SNN accuracy gap criteria, whereas the trend reverses on VGG-11/CIFAR-10. Mean-field and layerwise diagnostics indicate that finite-window sampling and terminal-layer mismatch are the main residual errors. Overall, our results establish finite-state CTMC neurons as a theoretically grounded framework for activation-flexible ANN-to-SNN conversion beyond fixed activation-neuron correspondences.
\end{abstract}

\section{Introduction}
ANN-to-SNN conversion is now a standard route to deep spiking inference, but most current methods are restricted to ReLU-like activation functions and optimized for low-latency rate or time-based coding \citep{ding2021optimal,wang2022towards,stanojevic2023exact}. This motivates a broader question: can spiking neurons approximate arbitrary monotonic activation functions while retaining simple internal dynamics? We address this question with neurons that store only a discrete state and whose transition rates depend affinely on the input.

Inspired by earlier Markov neuron models and network reductions \citep{cai2021model,wu2023multiband,chang2025minimizing,wang2026finitestate}, we introduce finite-state continuous-time Markov chain (CTMC) neurons for ANN-to-SNN conversion. Their stationary spike flux approximates a target activation after layerwise rate scaling. The transition structure shapes the input--output curve, while refractory transitions control saturation. Our low-state neurons accurately fit several monotone activations on their operating ranges; a generalized CTMC family uniformly approximates any continuous, nonnegative monotone activation on a compact interval. We then examine how activation shape and spike budget affect conversion efficiency.

On an MNIST MLP, sigmoid, softplus, and ReLU convert within a 1\% accuracy gap, and moderate clipping reduces synaptic-event cost. On VGG-11/MNIST, ReLU6 improves conversion efficiency, with residual error mainly associated with finite-spike sampling. The CIFAR-10 results highlight the additional importance of layerwise distribution alignment. Together, these findings identify practical targets for improving conversion accuracy and efficiency.

We make four contributions:
    1. We present a practical ANN-to-SNN conversion framework based on finite-state Markov neurons, fitted stationary firing curves, and layerwise rate scaling.
    2. We show that boundedness can materially improve rate-coded conversion efficiency, most clearly through a clipped-ReLU ablation and a deep-MNIST VGG comparison.
    3. We use mean-field and layerwise diagnostics to separate fitting bias from finite-spike variance and to localize conversion mismatch in deep networks.
    4. In the appendix we prove that a generalized finite-state CTMC neuron with affine input-dependent transition rates can uniformly approximate any monotone nonnegative continuous activation on a compact interval.

\section{Related Work}
\paragraph{ANN-to-SNN conversion.}
Threshold balancing and rate normalization established deep conversion \citep{cao2015spiking,diehl2015fast,rueckauer2017conversion}. Later methods use residual potentials, quantized source activations, calibration, optimized activation-to-rate maps, or temporal codes to reduce error and latency \citep{han2020rmp,deng2021optimal,li2021free,ding2021optimal,bu2022optimal,wang2022towards,hao2023offset,jiang2023unified,stanojevic2023exact}; recent post-training work also emphasizes inference-scale conversion across broader vision tasks \citep{bu2025inference}. Recent work targets non-ReLU activations and few-spike approximations \citep{oh2024sign,jeong2024few}. We instead study activation flexibility under a compact finite-state stochastic representation.

\paragraph{Bounded activations and event-driven hardware.}
Explicit caps such as ReLU6 are used in efficient deep networks \citep{sandler2018mobilenetv2}. In conversion, however, a cap is useful only if its reduction in high-rate events outweighs any loss of source-model information. We test this tradeoff directly. Finite-state memory is compatible with event-driven implementation, but SynOps alone omits random-number generation, transition scheduling, memory access, and communication costs; therefore we do not claim measured hardware efficiency \citep{davies2018loihi}.

\section{Finite-State Markov Conversion}
\subsection{Rate Curves as Effective Activations}

For a rate-coded unit, the expected output rate acts as its effective activation. 
Ideal IF dynamics yield a positive linear current--rate map, whereas leak introduces 
a rheobase and curvature. More generally, the stationary spike-rate curve of a 
stochastic neuron determines which ANN nonlinearities it can represent. The following 
result shows that a generalized finite-state CTMC family is sufficiently expressive 
to approximate a broad class of activation functions to arbitrary accuracy.

\begin{theorem}[Uniform approximation]
\label{thm:universal}
Let $\phi:[I_{\min}, I_{\max}] \rightarrow [0,\infty)$ be continuous and nondecreasing. 
For every $\varepsilon > 0$, there exists a finite-state CTMC with affine 
input-dependent transition rates whose stationary spike flux $F$ satisfies
\[
\sup_{I \in [I_{\min}, I_{\max}]}
\left|F(I) - \phi(I)\right| < \varepsilon.
\]
\end{theorem}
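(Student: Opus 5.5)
We will prove the statement constructively, building the CTMC as a parallel superposition of simple ``step-like'' subunits whose stationary fluxes add. Since $\phi$ is continuous on a compact interval, it is uniformly continuous, so it can be approximated within $\varepsilon/2$ by a nondecreasing staircase
\[
S(I)=\phi(I_{\min})+\sum_{k=1}^{N} h_k\,\ind\{I\ge c_k\},\qquad h_k=\phi(c_k)-\phi(c_{k-1})\ge 0,
\]
on a uniform grid $I_{\min}=c_0<c_1<\dots<c_N=I_{\max}$ fine enough that the oscillation of $\phi$ on each cell is below $\varepsilon/4$. We will then replace each indicator by a smooth sigmoidal flux produced by a small CTMC, and the constant $\phi(I_{\min})$ by an input-independent spiking loop. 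The total flux is $F=\sum_k F_k$. We interpret a ``finite-state CTMC'' as allowing the product (independent) chain of these subunits, or, equivalently, a single chain with a shared spike-emitting transition.

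\textbf{Step 1 (sigmoidal building block).} Consider a two-state chain $\{0,1\}$ with affine rates. The transition $0\to1$ has rate $a(I)=\lambda\,(I-c)_{\text{aff}}$, and $1\to0$ has constant rate $b$ and emits a spike. The stationary flux is $F=ab/(a+b)$, which is a Michaelis–Menten curve, not a step. Affine rates must stay nonnegative, so instead we will use a birth–death ladder of length $m$. Its forward rates are $\alpha(I)=\lambda(I-c+\delta)$ and its backward rates are constant $\beta=\lambda\delta$, with the spike emitted on the top-to-bottom reset transition. The stationary occupation of the top state behaves like $r^m/(1+r+\dots+r^m)$ with $r=\alpha/\beta=1+(I-c)/\delta$. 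As $m\to\infty$, this converges to $\ind\{r>1\}\cdot(1-1/r)$, and after shifting the threshold it gives a steep transition. Choosing instead $r=\exp$-like growth is unavailable, so we will exploit the $r^m$ polynomial. For $r$ bounded away from $1$ on either side, by at least $\eta$, the occupation is within $O((1+\eta)^{-m})$ of $0$ or of a value near $1$. We need the saturated value to be exactly constant. To arrange this, we will make the reset rate large ($\gamma\to\infty$ scaled) and normalize by $\gamma$, so that the flux is $\gamma\pi_{\text{top}}\cdot h_k/\gamma'$. We thereby obtain $F_k(I)\approx h_k\,\ind\{I\ge c_k\}$ uniformly outside a window $(c_k-\eta,c_k+\eta)$, with $0\le F_k\le h_k$ and $F_k$ monotone everywhere.

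\textbf{Step 2 (assembly and error).} Outside the windows, $|F-S|\le\sum_k h_k\cdot o(1)$, which we make less than $\varepsilon/4$. Inside a window around $c_k$, only $F_k$ is undetermined, but it lies between $0$ and $h_k$ and is monotone. Therefore $F$ lies between $S(c_k^-)$ and $S(c_k)$ up to $\varepsilon/4$, and both of these are within $\varepsilon/4$ of $\phi$ there, provided $\eta$ is smaller than the grid spacing. Combining this with the staircase error gives $\sup|F-\phi|<\varepsilon$. Tracking $m$, $\eta$ and $N$ in terms of the modulus of continuity $\omega_\phi$ then yields the explicit error bound the paper mentions.

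\textbf{Main obstacle.} The hard part is Step 1: producing an arbitrarily sharp, exactly bounded switch using only affine, nonnegative-on-$[I_{\min},I_{\max}]$ rates. Affine rates may go negative below the threshold, which is why we use the ratio $r=\alpha/\beta$ with $\alpha$ affine but positive on the whole interval. This means placing the zero of $\alpha$ to the left of $I_{\min}$ and letting the ladder length $m$, not the slope, create the sharpness. We must also verify that the saturated level is uniform in $I$ to within the required accuracy. If this proves delicate, the fallback is to take the spike flux as proportional to the occupation of a single absorbing-like top state, accessed via a reset transition of fixed rate. That occupation is manifestly monotone in $r$ and tends to $1$ uniformly for $r\ge1+\eta$.
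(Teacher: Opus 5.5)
Your outer architecture is sound and differs from the paper's. You approximate $\phi$ by a staircase, realize it as a sum of fluxes from independent subunits run as one product chain (still finite, with affine rates and additive spike flux), and close with the window argument of Step 2. That argument only needs each subunit to satisfy $0\le F_k\le h_k$ and to be uniformly close to $h_k\ind\{I\ge c_k\}$ off a window of half-width $\eta$.

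The gap is Step 1. Nothing you construct there yields such a switch, because nothing produces an $I$-independent plateau.
\begin{itemize}
\item \emph{Uniform ladder.} With forward rate $\alpha(I)$ and backward rate $\beta$ everywhere, the stationary law is geometric. By your own computation the top occupation tends to $(1-1/r)_+=(I-c)_+/(I-c+\delta)$. Since $\alpha\ge0$ on the interval forces $\delta\ge c-I_{\min}$, this rises on the scale $\delta$, not $\eta$. Just right of the window it is only about $\eta/(\eta+\delta)$. Increasing $m$ sharpens only the left edge.
\item \emph{Reset repair.} Suppose the spike is the top-to-bottom reset at rate $\gamma$. As $\gamma\to\infty$, the flux $\gamma\pi_{\mathrm{top}}$ tends to the reciprocal of the mean first-passage time from $0$ to $m$. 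For $r>1$ that time is about $m/(\alpha-\beta)$, so the flux is approximately $\lambda(I-c)_+/m$, a ramp. Multiplying by a constant $h_k/\gamma'$ rescales it but does not flatten it. Nonnegative sums of ramps are convex, so they cannot reproduce, say, a sigmoid. For small $\gamma$ you just get $\gamma(1-1/r)_+$ again.
\item \emph{Fallback.} Your ``absorbing-like top state'' points the right way. However, its claim that the top occupation tends to $1$ uniformly for $r\ge1+\eta$ is false for the ladder as specified. It becomes true only once the top state's exit rate is made small relative to $\alpha-\beta$.
\end{itemize}

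The missing ingredient is a genuine trap with a small, constant, spike-carrying exit. Keep your rates on states $0,\dots,m-1$, but let the top state $m$ leave only to $m-1$, at a constant rate $\epsilon_0$, and count the spike on that transition. Detailed balance gives $\pi_m=(\beta/\epsilon_0)r^m/[(\beta/\epsilon_0)r^m+\sum_{j<m}r^j]$. Hence:
\begin{itemize}
\item $F_k=\epsilon_0\pi_m\in[0,\epsilon_0]$ and is increasing in $I$;
\item $1-\pi_m\le\epsilon_0/(\alpha-\beta)\le\epsilon_0/(\lambda\eta)$ for $I\ge c+\eta$;
\item $\pi_m\le(\beta/\epsilon_0)(1-\eta/\delta)^m$ for $I\le c-\eta$.
\end{itemize}
Take $\epsilon_0=h_k$, then $\lambda$ large, then $m$ large. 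This gives the switch you need, and your argument then closes.

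The paper needs no staircase or switches. It uses a single birth--death chain with index-dependent affine rates $\gamma(N-i)u$ and $\gamma i(1-u)$, whose stationary weights are binomial. It attaches constant-rate refractory loops $A_i\to R_i\to A_i$ with spike rates $\kappa_i=h(i/N)$, where $h=g^{-1}\circ\psi$ and $g(z)=\rho z/(\rho+z)$. This gives exactly $F_N=g(B_Nh)$. Bernstein's theorem plus the $1$-Lipschitz property of $g$ then finishes the proof with $2(N+1)$ states and the explicit rate $L_h/(2\sqrt N)$. Your repaired route avoids Bernstein polynomials, but its product state space grows multiplicatively with the number of steps.
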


The proof is given in Appendix~\ref{app:universal}. In practice, we use compact two- and three-state parameterizations for the activation functions considered here; we next describe the practical three-state construction.
\begin{figure}[!t]
  \centering
  \includegraphics[width=\linewidth]{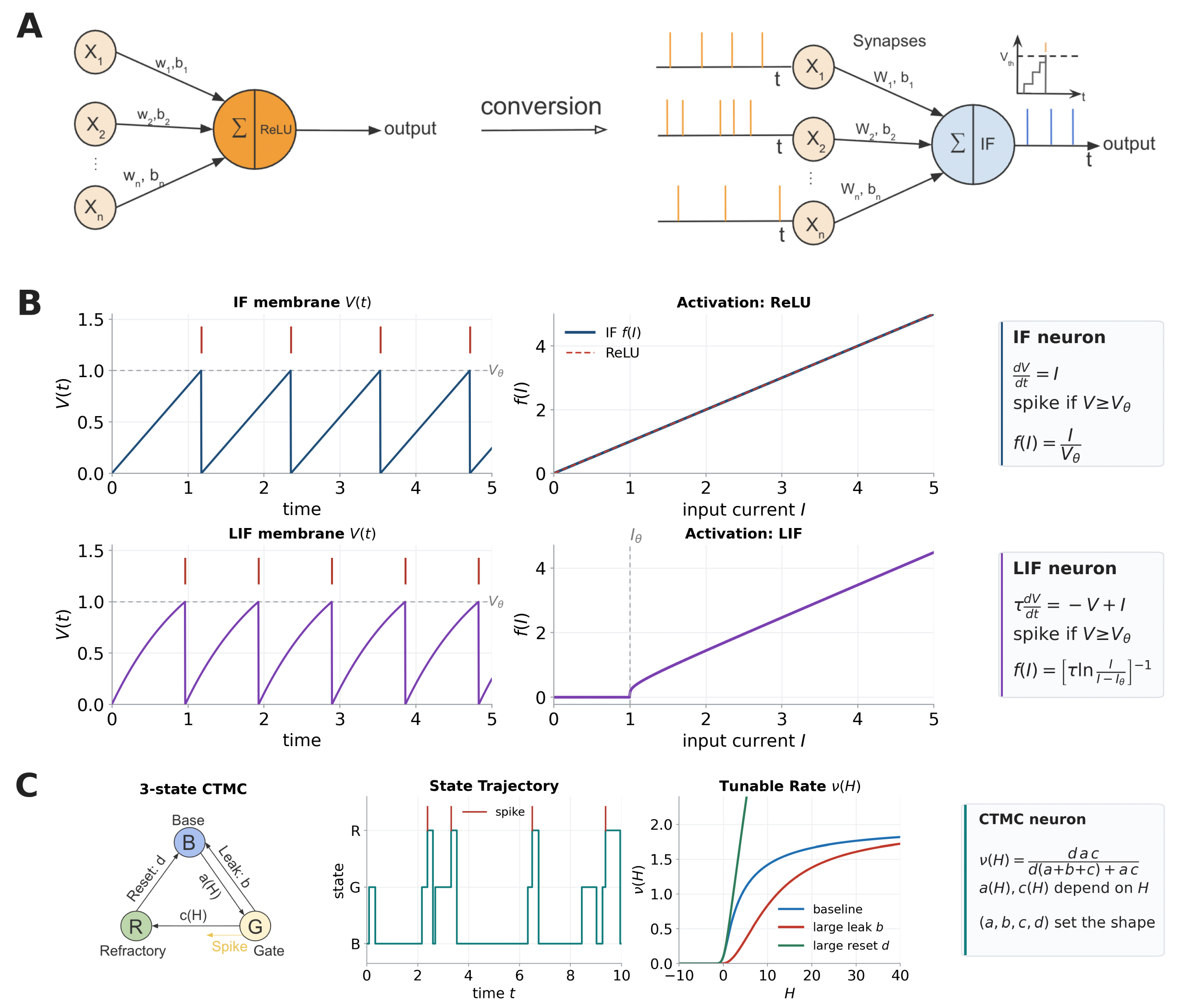}
  \caption{\textbf{From conventional neurons to a finite-state CTMC neuron.}
  \textbf{A:} Rate-coded ANN-to-SNN conversion at the level of one unit.
  \textbf{B:} IF and LIF membrane trajectories and their current-to-rate curves;
  IF aligns with the positive linear ReLU branch, whereas leak introduces a
  rheobase and curvature.
  \textbf{C:} The practical three-state neuron has base ($B$), gate ($G$), and
  refractory ($R$) states. Transitions $B\!\to\!G$, $G\!\to\!B$,
  $G\!\to\!R$ (spike), and $R\!\to\!B$ occur at rates $a(H)$, $b$, $c(H)$,
  and $d$, respectively. The displayed trajectories and parameter sweeps
  illustrate how leak and reset reshape the stationary firing curve $\nu(H)$.}
  \label{fig:overview}
\end{figure}

\subsection{Three-state neuron and stationary spike flux}
Let $X(t)\in\{B,G,R\}$ and write $a=a_\theta(H)\ge0$ and $c=c_\theta(H)\ge0$ for fitted input-dependent transition rates. The generator is
\begin{equation}
Q_\theta(H)=
\begin{bmatrix}
-a & a & 0\\
b & -(b+c) & c\\
d & 0 & -d
\end{bmatrix}, \qquad b,d>0.
\label{eq:generator}
\end{equation}
A spike is emitted on $G\to R$. Solving $\bm\pi Q=0$ and $\bm\pi\bm 1=1$ gives
\begin{equation}
\nu_\theta(H)=\pi_G(H)c_\theta(H)
=\frac{d\,a_\theta(H)c_\theta(H)}{d\,[a_\theta(H)+b+c_\theta(H)]+a_\theta(H)c_\theta(H)}.
\label{eq:rate}
\end{equation}
The practical model fits $\theta$ by least squares on a finite operating domain $\mathcal D$ sampled from ANN preactivations,
\begin{equation}
\min_{\theta}\ \frac{1}{M}\sum_{m=1}^M
\left[\nu_\theta(H_m)-\phi(H_m)\right]^2,
\qquad H_m\in\mathcal D,
\label{eq:fit}
\end{equation}
subject to nonnegative rates. All claims for ReLU concern compact empirical ranges; a finite-state stationary model is not claimed to match an unbounded function globally. Appendix~\ref{app:stationary} derives equation~\eqref{eq:rate}.

\begin{figure}[htbp]
  \centering
  \includegraphics[width=0.9\linewidth]{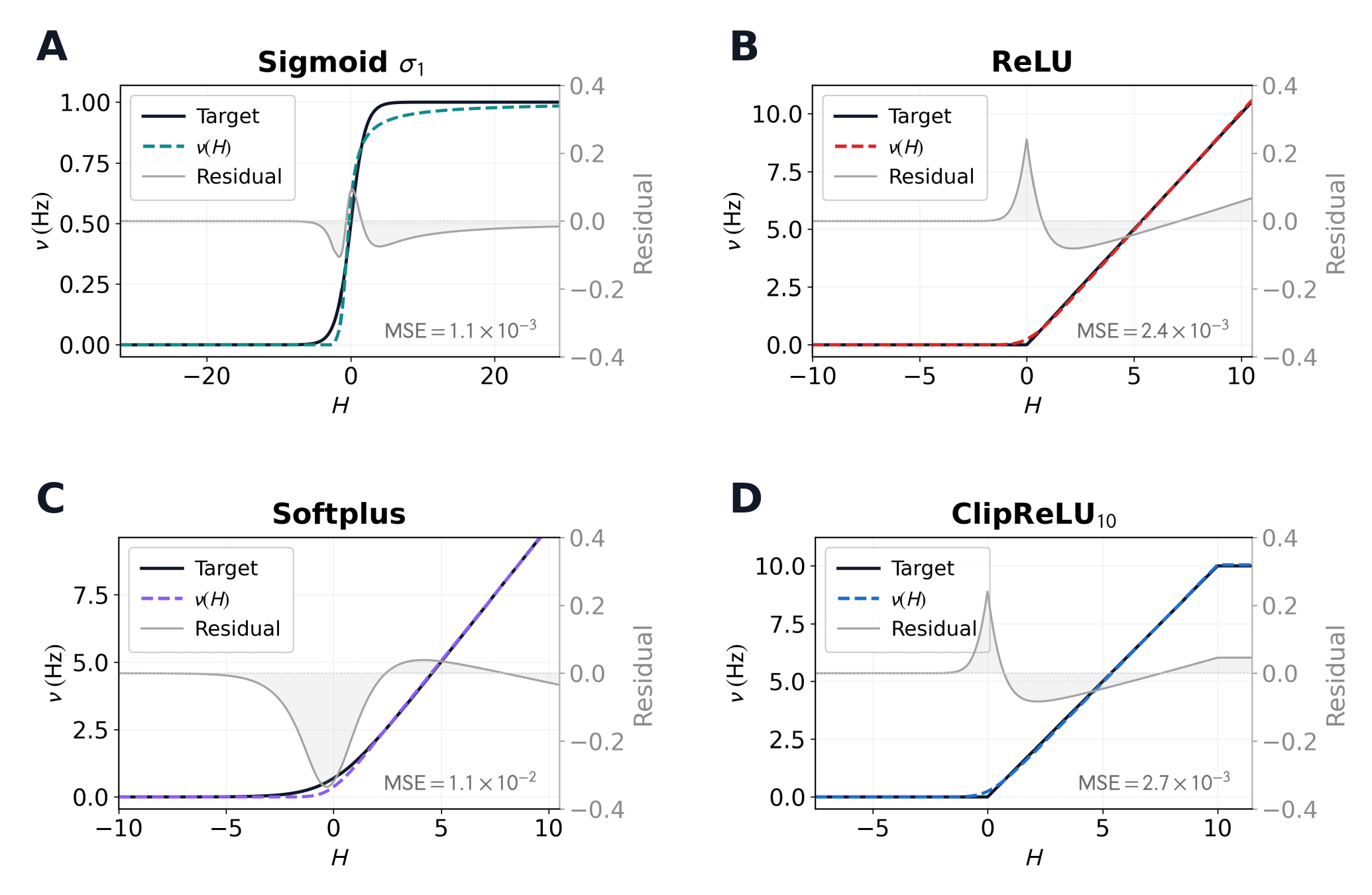} 
  \caption{\textbf{Stationary-rate fits on the displayed input domains.} Solid curves are target activations, dashed curves are fitted CTMC rates $\nu(H)$, and grey curves are residuals (right axis, shared range). The reported mean-squared errors are $1.1\times10^{-3}$ for sigmoid, $2.4\times10^{-3}$ for ReLU, $1.1\times10^{-2}$ for softplus, and $2.7\times10^{-3}$ for $\cliprelu_{10}$. The sigmoid uses the three-state family; the piecewise-linear/softplus examples use the low-state variant used in the implementation. These fits demonstrate finite-range flexibility, not exact global representation.}
  \label{fig:fits}
\end{figure}

\subsection{Layerwise scaling and finite-window decoding}
For ANN layer $\ell$,
\begin{equation}
 z^{\ell}=W^{\ell}a^{\ell-1}+b^{\ell},\qquad a^{\ell}=\phi_\ell(z^{\ell}),
\end{equation}
we select a positive scale $\alpha_\ell$ and target the rate $r^{\ell,*}=\alpha_\ell a^\ell$. To preserve the next affine map in expectation, incoming weights are rescaled as $\widetilde W^{\ell}=W^{\ell}/\alpha_{\ell-1}$. A generic filtered spike input is
\begin{equation}
H_j^\ell(t)=b_j^\ell+\sum_i\widetilde W_{ji}^\ell
\bigl(k_{\tau_H}*dN_i^{\ell-1}\bigr)(t),\quad
k_{\tau_H}(t)=\tau_H^{-1}e^{-t/\tau_H}\ind\{t\ge0\}.
\label{eq:trace}
\end{equation}
Because the kernel integrates to one, a stationary presynaptic rate $r^{\ell-1}$ yields the desired mean input. Over an inference window $T$, the decoded rate is $\widehat r_i^\ell=N_i^\ell(T)/T$ and $\widehat a_i^\ell=\widehat r_i^\ell/\alpha_\ell$. This construction separates the stationary mapping error from finite-$T$ sampling error.

\subsection{Evaluation quantities}
The accuracy gap is $\gap=A_\ann-A_\snn$ for the corresponding source ANN, with each accuracy expressed in percentage points. Thus $\gap$ is an absolute percentage-point difference, not a normalized relative error. For a sample $x$, the raw synaptic-event count is
\begin{equation}
\synops(x)=\sum_{\ell=1}^{L-1}\sum_i N_i^\ell(T;x)\,\operatorname{fanout}_i^{\ell+1}.
\label{eq:synops}
\end{equation}
We also report emitted spikes per neuron. For deep models, a deterministic mean-field network replaces sampled spike counts by $\nu_\theta(H)$. Its accuracy $A_\mf$ gives the exact algebraic decomposition
\begin{equation}
A_\ann-A_\snn=(A_\ann-A_\mf)+(A_\mf-A_\snn),
\label{eq:decomp}
\end{equation}
which we call \emph{mean-field mismatch} and \emph{finite-window sampling gap}, and is an operational accuracy decomposition.

\section{Experimental Design}
We evaluate MNIST \citep{lecun1998gradient} with an MLP of width $784$--$256$--$128$--$10$ and a strided-convolution VGG-11, and CIFAR-10 \citep{krizhevsky2009learning} with VGG-11 \citep{simonyan2015very}. The MLP experiments compare sigmoid, ReLU, softplus, a right-shifted sigmoid, and clipped ReLU. We define $\cliprelu_K(x)=\min\{\max(x,0),K\}$. In the clipping ablation, one trained ReLU MLP is reused and each forward pass is clamped to $[0,K]$; consequently each $K$ has its own ANN baseline. Deep experiments compare ReLU with $\cliprelu_6$ on MNIST and with a threshold derived from the empirical 95th percentile of the ReLU preactivation distribution on CIFAR-10.

Pareto frontiers vary the available simulation and rate parameters. MLP error bars are standard deviations over four seeds. VGG-11/MNIST reports means over four seeds and eight stochastic trials per seed; VGG-11/CIFAR-10 uses four seeds and four trials per seed. The plotted SEM across runs describes run-to-run variability, while trials sharing an ANN are nested within a training seed. Table~\ref{tab:headline} records the operating points highlighted by the supplied figures. Since each accuracy gap is referenced to its corresponding source ANN, clipped and unclipped rows compare conversion fidelity rather than absolute end-to-end accuracy.

\begin{table}[t]
\centering
\small
\caption{Highlighted operating points. SynOps are raw spike transmissions per sample. ``Criterion'' is the ANN--SNN accuracy gap in percentage points.}
\label{tab:headline}
\begin{tabular}{llccc}
\toprule
Setting & Activation & Criterion & SynOps/sample& Spikes/neuron\\
\midrule
MNIST MLP & sigmoid & $<1\%$ & $14.4{\pm}0.47\,$k & $0.90$\\
 MNIST    MLP& softplus&  $<1\%$ & $18.1{\pm}2.61\,$k&$1.02$\\
MNIST MLP & ReLU & $<1\%$ & $13.0{\pm}0.58\,$k & $0.87$\\
MNIST MLP & $\cliprelu_4$ & $<1\%$ & $9.1\,$k & $0.55$\\
VGG-11/MNIST & ReLU & $<0.5\%$ & $2.64\,$G & $12.97$\\
VGG-11/MNIST & $\cliprelu_6$ & $<0.5\%$ & $1.93\,$G & $9.51$\\
VGG-11/CIFAR-10 & ReLU & $<2\%$ & $2.06\,$G & $13.5$\\
VGG-11/CIFAR-10 & $\cliprelu_{q95}$ & $<2\%$ & $2.66\,$G & $15.7$\\
\bottomrule
\end{tabular}
\end{table}

\section{Results}
\subsection{Activation shape changes where the spike budget is spent}
Figure~\ref{fig:mlp}A shows that all sigmoid, softplus and ReLU reach a sub-1\% MLP gap, with ReLU doing so at the lowest SynOps. As for sigmoid and ReLU, the aggregate spike counts are similar (0.90 versus 0.87 spikes/neuron), but their distributions differ: sigmoid activity is denser and compressed, whereas ReLU is silent through much of the population and develops a heavy upper tail. A plausible source of the sigmoid cost is its nonzero output for negative preactivations. The shift-by-two control in Figure~\ref{fig:mlp}C is consistent with this interpretation but is budget dependent: the shift is worse at low budget and better only at sufficiently high budget, so it does not establish a universal benefit.

\begin{figure}[htbp]
  \centering
  \includegraphics[width=0.9\linewidth]{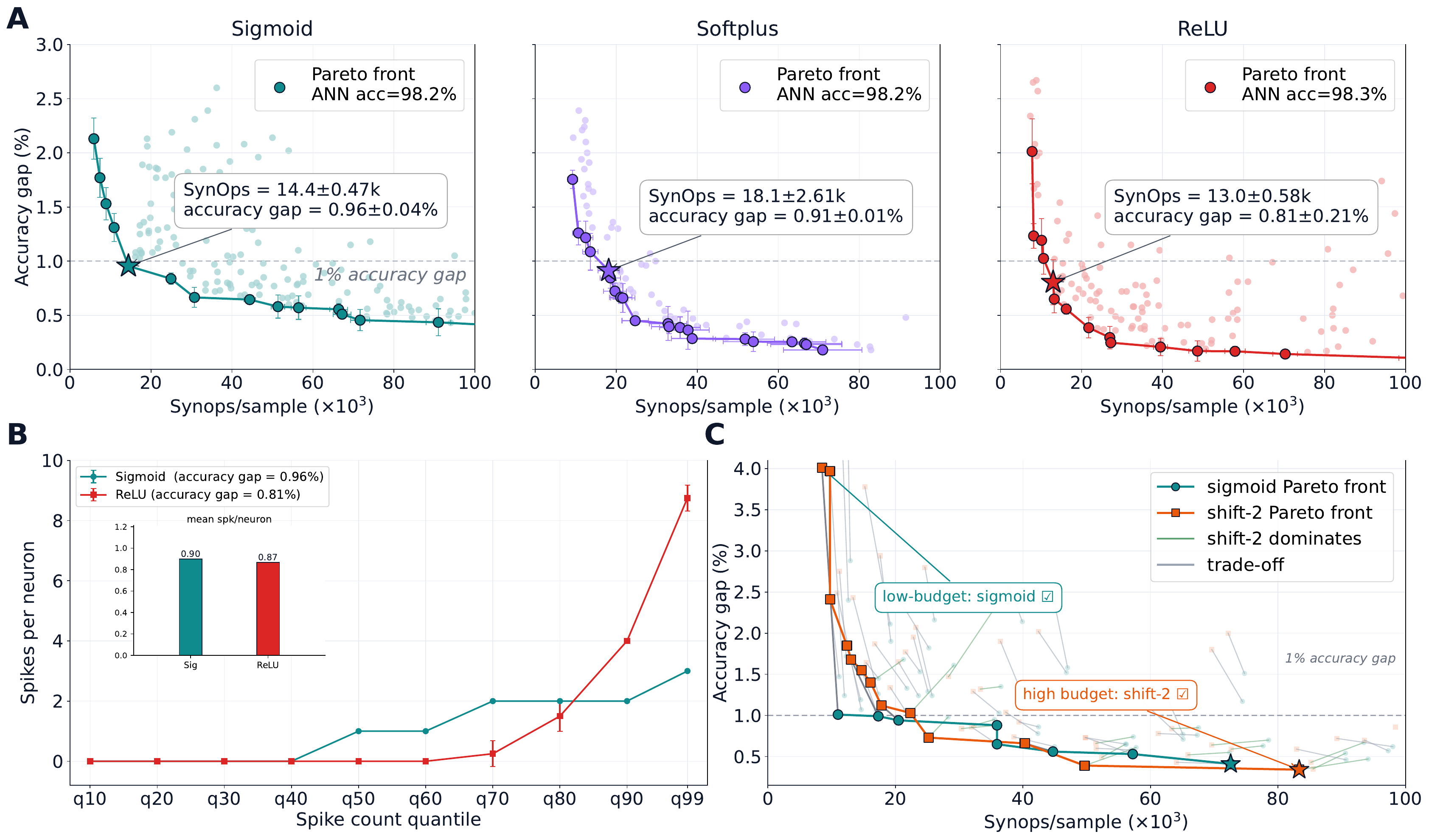}
  \caption{\textbf{MNIST MLP activation study} ($784$--$256$--$128$--$10$; error bars: four-seed standard deviations). \textbf{A:} Accuracy gap versus SynOps for sigmoid (source ANN 98.2\%), softplus (98.2\%) and ReLU (98.3\%); stars mark the minimum-cost configurations below a 1\% gap: $14.4\pm0.47$k SynOps at $0.96\pm0.04\%$ for sigmoid, $18.1\pm 2.61$k SynOps at $0.91\pm0.01\%$ for softplus and $13.0\pm0.58$k at $0.81\pm0.21\%$ for ReLU. \textbf{B:} Per-neuron spike-count quantiles show a compressed sigmoid distribution and a sparse, heavy-tailed ReLU distribution; mean counts are 0.90 and 0.87. \textbf{C:} Pareto frontiers for sigmoid and a two-unit right shift. Matched-configuration arrows show that suppressing the negative-input tail helps at high budget but can trade away accuracy at low budget.}
  \label{fig:mlp}
\end{figure}

\subsection{Moderate clipping improves the shallow frontier; aggressive clipping fails}
The clip-on-forward ablation in Figure~\ref{fig:clip} isolates a non-monotone tradeoff. The minimum SynOps needed for an accuracy gap below one percentage point are 16.0k, 9.1k, 9.8k, 11.1k, and 13.0k for $K=2,4,6,10,\infty$, respectively; $K=1$ never meets the criterion. Thus $K=4$ reduces the event count by 30\% relative to ReLU, while tighter clipping is harmful. Layerwise positive-preactivation quantiles and saturation fractions explain the transition: moderate caps primarily remove the high-rate tail, whereas small $K$ intersects the distribution body. At $K=1$, approximately 80\% of active layer-1 units and 39\% of active layer-2 units exceed the cap. This experiment supports a conditional mechanism---tail truncation can help---rather than a general advantage of bounded activations.

\begin{figure}[!t]
  \centering
  \includegraphics[
    width=0.95\linewidth
  ]{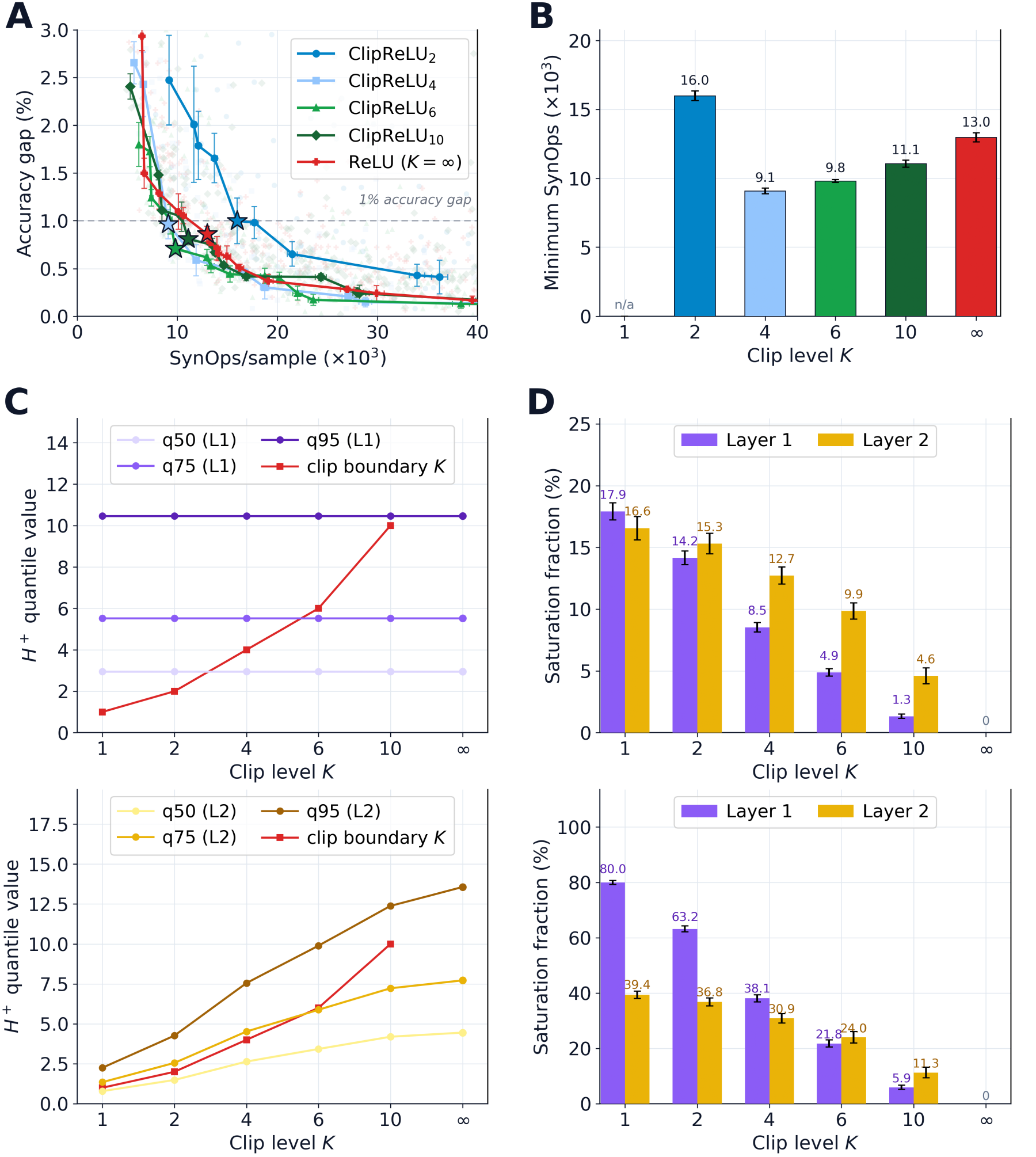}
 \caption{\textbf{$\operatorname{\mathbf{ClipReLU}}_{\mathbf K}$ ablation on the MNIST MLP, using shared ReLU weights and clip-on-forward evaluation.}
\textbf{A:} Accuracy-gap/SynOps frontiers.
\textbf{B:} Minimum SynOps for an accuracy gap below one percentage point; $K=4$ is best, and $K=1$ has no configuration satisfying the criterion.
\textbf{C:} Layerwise positive-preactivation quantiles with the clip boundary.
\textbf{D:} Fraction exceeding the cap, computed over all units (top) and active units (bottom).
Error bars denote standard deviations across four training seeds.
Because clipping changes the ANN forward map, each $K$ is evaluated against its own source accuracy.}
\label{fig:clip}
\end{figure}

\subsection{On VGG-11/MNIST, clipping lowers cost and sampling dominates the residual gap}
In Figure~\ref{fig:vggmnist}A, $\cliprelu_6$ reaches a sub-0.5\% gap with $1.93\times10^9$ SynOps and 9.51 spikes/neuron, compared with $2.64\times10^9$ and 12.97 for ReLU---approximately 27\% reductions in both quantities. Figure~\ref{fig:vggmnist}B shows that $A_\ann-A_\mf$ remains small across representative configurations, while $A_\mf-A_\snn$ is larger and varies with the finite simulation budget. Therefore the dominant residual in this experiment is finite-window stochastic sampling, not stationary-rate fit alone. Layerwise $q50/q90/q99$ traces in Figure~\ref{fig:vggmnist}C agree through early and middle layers and diverge mainly in the last two layers and upper quantiles; the bounded variant reduces this terminal mismatch.

\begin{figure}[!tbp]
  \centering
  \includegraphics[width=\linewidth,height=0.55\textheight,keepaspectratio]{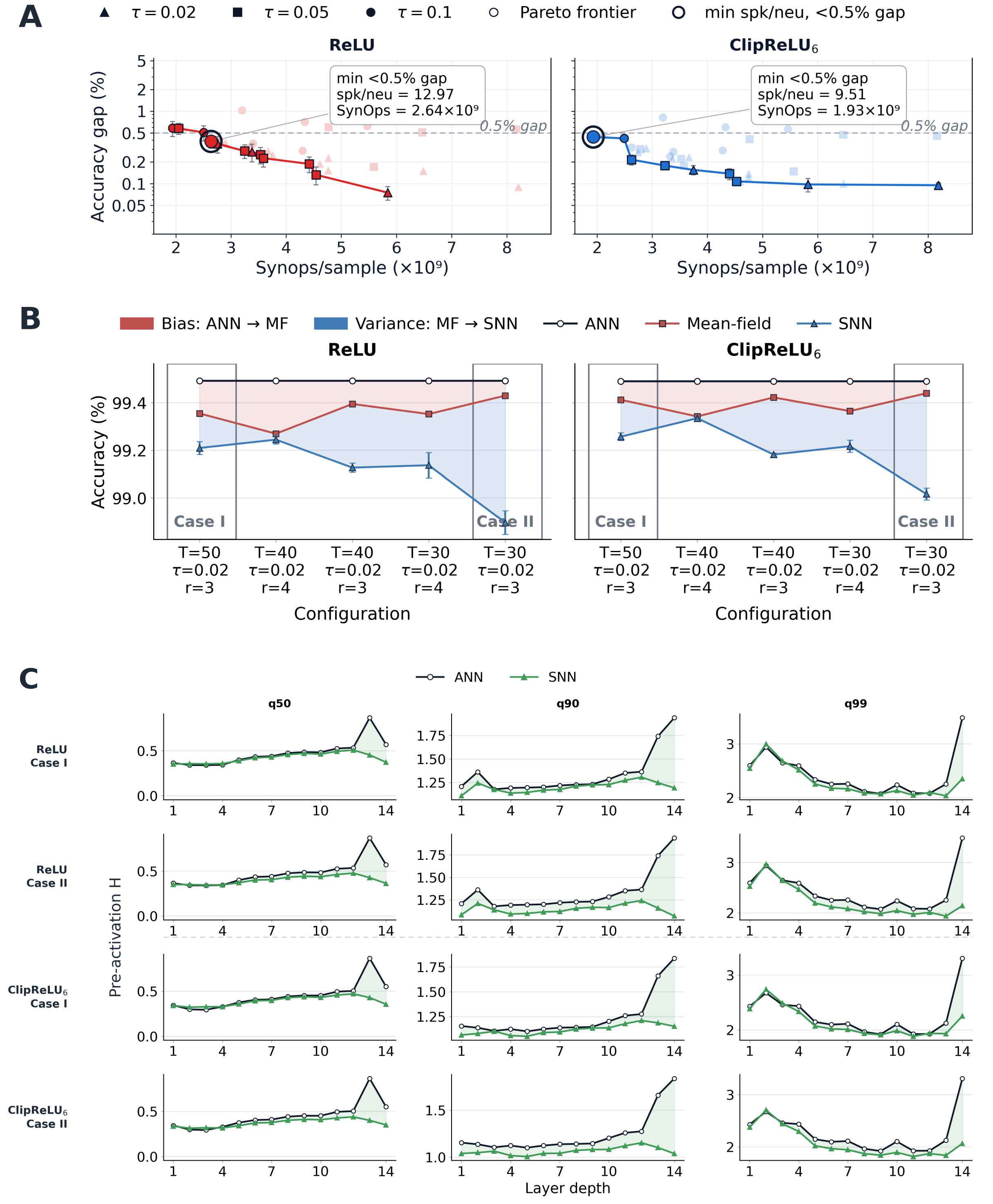}
  \caption{\textbf{VGG-11/MNIST: ReLU versus $\cliprelu_6$.} Means over four seeds and eight stochastic trials per seed; error bars are the supplied SEM across runs. \textbf{A:} Pareto frontiers; marker shape denotes $\tau$, and open circles mark the minimum-spike point below a 0.5\% gap. \textbf{B:} ANN, mean-field, and SNN accuracies; Cases I/II use $(T,\tau,r)=(50,0.02,3)$ and $(30,0.02,3)$. \textbf{C:} Preactivation quantiles across 14 layers.}
  \label{fig:vggmnist}
\end{figure}

\subsection{On CIFAR-10, the clipped variant is less efficient}
The VGG-11/CIFAR-10 result reverses the MNIST trend (Figure~\ref{fig:cifar}). ReLU reaches the sub-2\% band at $2.06\times10^9$ SynOps and 13.5 spikes/neuron, whereas $\cliprelu_{q95}$ requires $2.66\times10^9$ and 15.7. The clipped operating point therefore uses 29\% more SynOps. The same qualitative trend holds when cost is measured by CTMC state transitions on ReLU and $\cliprelu_{6}$ (Appendix~\ref{app:reporting}, Fig.~\ref{fig:cifar-transition-pareto}). For reference, conventional ReLU IF conversion shows a comparable accuracy--cost tradeoff on VGG-11/CIFAR-10; full results are provided in Appendix~\ref{app:baseline}.

The operational decomposition again assigns the larger component to finite-window sampling, especially at lower budget. Layerwise quantiles track well until the final layers, where upper-quantile underestimation is strongest for the clipped network. These observations are compatible with clipping removing useful upper-tail information, but they do not by themselves prove that causal explanation; a matched retraining and calibration study is needed.

\begin{figure}[!t]
  \centering
  \includegraphics[width=\linewidth,height=0.55\textheight,keepaspectratio]{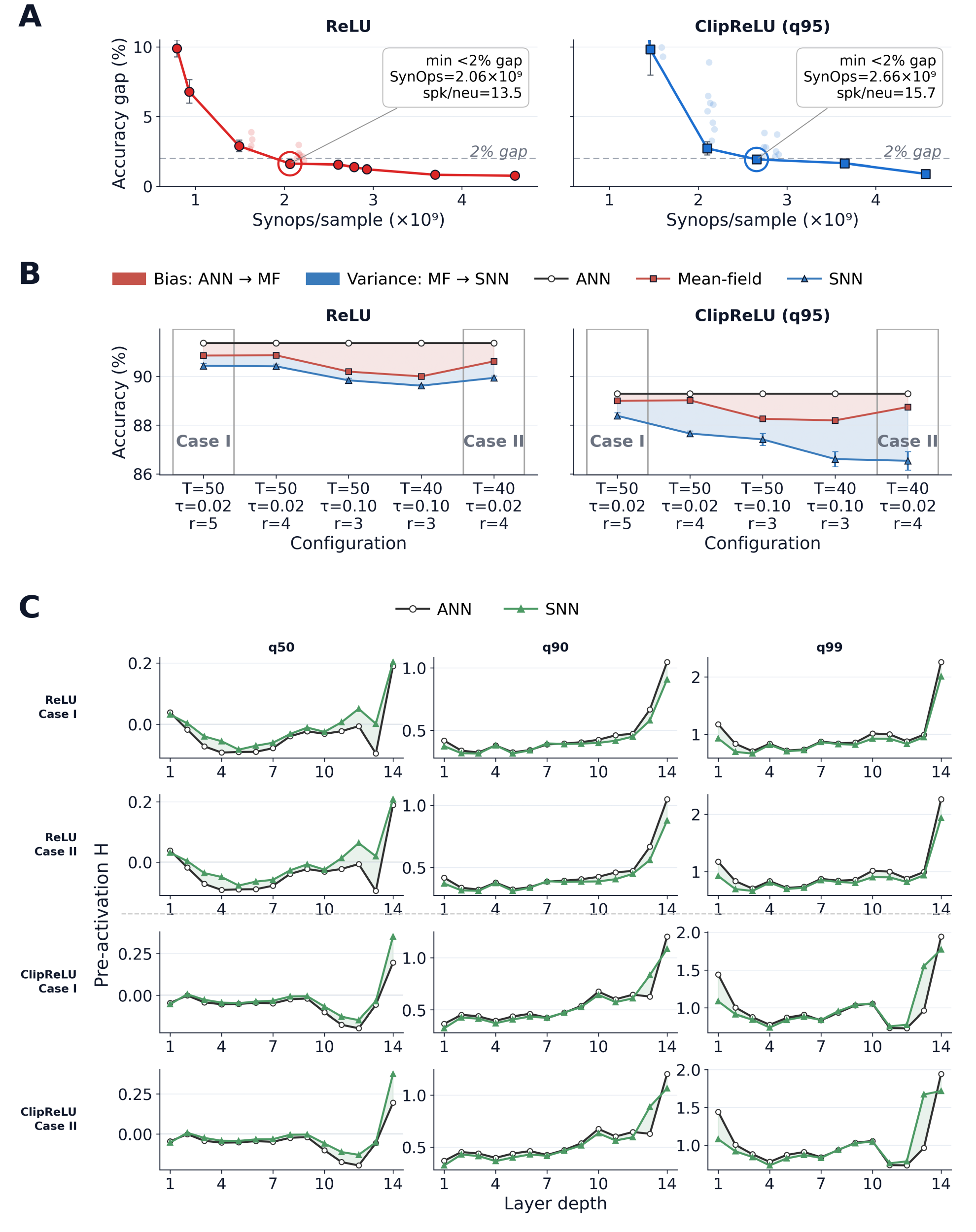}
  \caption{\textbf{VGG-11/CIFAR-10: ReLU versus a 95th-percentile clipped variant.} Means over four seeds and four stochastic trials per seed; error bars are the supplied SEM. \textbf{A:} Pareto frontiers and points below a 2\% gap. \textbf{B:} ANN, mean-field, and SNN accuracies. \textbf{C:} Preactivation quantiles across 14 layers for Cases I/II, $(T,\tau,r)=(50,0.02,5)$ and $(40,0.02,4)$; the largest discrepancy is terminal and high-quantile.}
  \label{fig:cifar}
\end{figure}

\section{Discussion}
Finite-state Markov neurons provide a common conversion framework for ANNs with different monotone activations. Combining stationary firing-rate approximation with layerwise scaling connects activation design to spike cost. The clipped-ReLU and VGG-11/MNIST experiments demonstrate that moderate clipping can reduce synaptic-event counts. The CIFAR-10 results complement this finding by identifying layerwise mismatch as a target for further improvement.

Mean-field and layerwise diagnostics offer a basis for jointly optimizing clipping thresholds, rate scaling, and spike allocation across layers. Future work can investigate how finite-window sampling errors propagate through deep networks and use this analysis to improve calibration. Evaluating event-driven hardware implementations would test whether the observed reductions in synaptic events also reduce inference latency and energy consumption.

\section{Conclusion}
Finite-state Markov neurons extend ANN-to-SNN conversion beyond ReLU-like activations through simple state dynamics and affine transition rates. The framework combines a general approximation guarantee with accurate low-state fits and demonstrated reductions in synaptic-event cost. These results provide a foundation for jointly designing activation functions, neuron dynamics, and layerwise calibration to support efficient spiking inference.

\FloatBarrier
\section*{Reproducibility Statement}
Equations~\eqref{eq:generator}--\eqref{eq:decomp} specify the model, scaling, decoding, and reported cost proxy; Appendices~\ref{app:stationary} and \ref{app:universal} give the analytical derivations, including the stationary firing rate and the universal approximation result. Appendices \ref{app:baseline} and \ref{app:reporting} provide additional baseline and diagnostic results. The supplementary material includes an anonymous code archive containing selected implementation code supporting the experiments and analyses reported in the paper.
\section*{Ethics Statement}
The reported experiments use standard image-classification benchmarks and do not involve human participants or sensitive personal data. The principal risk is over-interpreting SynOps as physical energy; throughout the paper it is described only as a raw event-transmission proxy.

\IfFileExists{iclr2027_conference.bst}{%
  \bibliographystyle{iclr2027_conference}%
}{%
  \IfFileExists{iclr2026_conference.bst}{%
    \bibliographystyle{iclr2026_conference}%
  }{%
    \bibliographystyle{plainnat}%
  }%
}
\bibliography{refs_iclr}

\appendix

\section{Stationary Rate of the Three-State CTMC}
\label{app:stationary}
Let $\bm\pi=(\pi_B,\pi_G,\pi_R)$ denote the stationary distribution of equation~\eqref{eq:generator}. The gate and refractory balance equations are
\begin{equation}
\pi_B a=\pi_G(b+c),\qquad \pi_Gc=\pi_Rd.
\end{equation}
Hence $\pi_B=\pi_G(b+c)/a$ and $\pi_R=\pi_Gc/d$. Normalization yields
\begin{equation}
\pi_G=\left(\frac{b+c}{a}+1+\frac{c}{d}\right)^{-1}.
\end{equation}
Multiplication by the spike-transition rate $c$ gives
\begin{equation}
\nu(H)=\pi_Gc=\frac{acd}{d(a+b+c)+ac},
\end{equation}
which is equation~\eqref{eq:rate}. This calculation assumes an irreducible chain at the evaluated input; boundary inputs with zero rates are obtained by continuity or by restricting the fitting domain to positive-rate parameters \citep{norris1998markov}.

\section{Universal Approximation by a General Finite-State CTMC}
\label{app:universal}

The low-state neurons used in our experiments are deliberately compact. 
To prove Theorem~\ref{thm:universal}, we consider a more general finite-state 
CTMC construction that addresses expressivity in principle while keeping every 
input-dependent transition rate affine in the input.

Let $\phi:[I_{\min},I_{\max}]\to[0,\infty)$ be continuous and nondecreasing. 
Normalize the input as
\begin{equation}
u=\frac{I-I_{\min}}{I_{\max}-I_{\min}}\in[0,1],
\qquad
\psi(u)=\phi\bigl(I_{\min}+(I_{\max}-I_{\min})u\bigr).
\end{equation}

Consider active states $A_0,\ldots,A_N$ with birth--death transitions
\begin{align}
A_i&\to A_{i+1} &&\text{at rate }\gamma(N-i)u,\\
A_i&\to A_{i-1} &&\text{at rate }\gamma i(1-u),
\end{align}
with absent boundary transitions omitted. Attach a refractory state $R_i$ to 
each active state:
\begin{equation}
A_i\xrightarrow{\kappa_i,\,\mathrm{spike}}R_i,
\qquad
R_i\xrightarrow{\rho}A_i,
\qquad
\kappa_i\ge0,\ \rho>0.
\end{equation}
Since $u$ is affine in $I$, all input-dependent transition rates are also affine 
functions of $I$.

The active birth--death chain has binomial stationary weights
\begin{equation}
b_i(u)=\binom{N}{i}u^i(1-u)^{N-i}.
\end{equation}
Writing
\begin{equation}
M_N(u)=\sum_{i=0}^{N}\kappa_i b_i(u),
\end{equation}
direct balance and normalization give the stationary spike flux
\begin{equation}
F_N(u)=\frac{\rho M_N(u)}{\rho+M_N(u)}.
\label{eq:bernrate}
\end{equation}

\begin{proof}[Proof of Theorem~\ref{thm:universal}]
Choose $\rho>\max_{u\in[0,1]}\psi(u)$ and define
\begin{equation}
g(z)=\frac{\rho z}{\rho+z}.
\end{equation}
On the target range, $g$ is invertible, with
\begin{equation}
h(u)=g^{-1}(\psi(u))
=\frac{\rho\psi(u)}{\rho-\psi(u)}.
\end{equation}
Because $\psi$ is continuous, nonnegative, and nondecreasing, so is $h$. 
Set $\kappa_i=h(i/N)$. Then $M_N$ in equation~\eqref{eq:bernrate} is the 
Bernstein polynomial of $h$:
\begin{equation}
M_N(u)
=B_Nh(u)
=\sum_{i=0}^{N}
h(i/N)\binom{N}{i}u^i(1-u)^{N-i}.
\end{equation}
By the Bernstein approximation theorem,
\begin{equation}
\|B_Nh-h\|_\infty\to0
\end{equation}
as $N\to\infty$ \citep{lorentz1986bernstein}. Moreover,
\begin{equation}
g'(z)=\frac{\rho^2}{(\rho+z)^2}\le1,
\end{equation}
so $g$ is $1$-Lipschitz on $[0,\infty)$. Therefore,
\begin{align}
|F_N(u)-\psi(u)|
&=|g(B_Nh(u))-g(h(u))|\\
&\le |B_Nh(u)-h(u)|.
\end{align}
The right-hand side converges uniformly to zero. Hence, for every 
$\varepsilon>0$, there exists $N$ such that
\begin{equation}
\sup_{u\in[0,1]}|F_N(u)-\psi(u)|<\varepsilon.
\end{equation}
Applying the affine change of variables between $u$ and $I$ gives
\begin{equation}
\sup_{I\in[I_{\min},I_{\max}]}
\left|
F_N\!\left(
\frac{I-I_{\min}}{I_{\max}-I_{\min}}
\right)-\phi(I)
\right|
<\varepsilon,
\end{equation}
which proves the result.
\end{proof}

\begin{corollary}[One quantitative bound]
If $h$ is $L_h$-Lipschitz, then
\begin{equation}
\|F_N-\psi\|_\infty\le\frac{L_h}{2\sqrt N}.
\end{equation}
Consequently, $N\ge(L_h/(2\varepsilon))^2$ is sufficient for error at most $\varepsilon$.
\end{corollary}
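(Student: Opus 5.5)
The plan is to reduce the corollary to the classical quantitative Bernstein estimate, using the same Lipschitz transfer as in the proof of Theorem~\ref{thm:universal}. That proof shows $|F_N(u)-\psi(u)|\le|B_Nh(u)-h(u)|$ pointwise, because $g(z)=\rho z/(\rho+z)$ is $1$-Lipschitz on $[0,\infty)$. Two facts make this valid here: $B_Nh\ge0$, since $h\ge0$ and the Bernstein weights are nonnegative, and $g(h(u))=\psi(u)$. So it suffices to show $\|B_Nh-h\|_\infty\le L_h/(2\sqrt N)$ whenever $h$ is $L_h$-Lipschitz on $[0,1]$.

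For the Bernstein bound, fix $u\in[0,1]$ and use that the binomial weights $b_i(u)$ sum to one. This gives
\[
|B_Nh(u)-h(u)|\le\sum_{i=0}^N|h(i/N)-h(u)|\,b_i(u)\le L_h\sum_{i=0}^N\left|\tfrac{i}{N}-u\right| b_i(u).
\]
The last sum is $\E|S/N-u|$ with $S\sim\mathrm{Bin}(N,u)$. By Cauchy--Schwarz (Jensen),
\[
\E|S/N-u|\le\sqrt{\Var(S/N)}=\sqrt{u(1-u)/N}\le\tfrac{1}{2\sqrt N},
\]
because $u(1-u)\le1/4$. Taking the supremum over $u$ yields $\|F_N-\psi\|_\infty\le L_h/(2\sqrt N)$. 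The sufficiency claim then follows by solving $L_h/(2\sqrt N)\le\varepsilon$ for $N$, which gives $N\ge(L_h/(2\varepsilon))^2$.

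The only step I expect to need care is the variance of the binomial proportion, which I would compute directly from $\E S=Nu$ and $\Var S=Nu(1-u)$. It is also worth noting, though not required, when the hypothesis on $h$ holds. Since $h'=\rho^2\psi'/(\rho-\psi)^2$ wherever $\psi'$ exists, $h$ is $L_h$-Lipschitz with $L_h\le L_\psi\rho^2/(\rho-\max\psi)^2$ whenever $\psi$ is $L_\psi$-Lipschitz. In terms of $I$, the constant $L_\psi$ equals $(I_{\max}-I_{\min})$ times the Lipschitz constant of $\phi$.
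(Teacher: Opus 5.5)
Your proposal is correct and is essentially the paper's proof: bound $|B_Nh(u)-h(u)|$ by $L_h\,\mathbb{E}|K/N-u|\le L_h\sqrt{u(1-u)/N}\le L_h/(2\sqrt N)$ with $K\sim\operatorname{Binomial}(N,u)$, then transfer the bound through the $1$-Lipschitz map $g$. Your explicit check that $B_Nh\ge 0$, which is needed for $g$ to be $1$-Lipschitz on $[0,\infty)$ at that point, and your remark on when $h$ is Lipschitz are useful details that the paper leaves implicit.
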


\begin{proof}
For $K\sim\operatorname{Binomial}(N,u)$, $B_Nh(u)=\E[h(K/N)]$. Hence
\begin{align}
|B_Nh(u)-h(u)|
&\le L_h\E|K/N-u|\\
&\le L_h\sqrt{\Var(K/N)}
=L_h\sqrt{u(1-u)/N}
\le\frac{L_h}{2\sqrt N}.
\end{align}
Apply the 1-Lipschitz property of $g$ used above.
\end{proof}

\begin{remark}
The theorem is an expressivity statement, not a practical complexity guarantee. The main experiments use two- or three-state parameterizations, whereas the construction uses $2(N+1)$ active-plus-refractory states. Establishing tight state, spike, and finite-time complexity for useful activation classes remains open.
\end{remark}
\paragraph{Fully connected three-state example.}
The universal approximation result concerns a generalized finite-state CTMC family,
whereas the main experiments use deliberately restricted low-state parameterizations.
As an intermediate example, we also fit a fully connected three-state CTMC in which
transitions are allowed between all pairs of states. Figure~\ref{fig:softplus-fullstate-n3}
shows that this additional transition flexibility substantially improves the softplus fit.

\begin{figure}[H]
  \centering
  \includegraphics[width=0.72\linewidth]{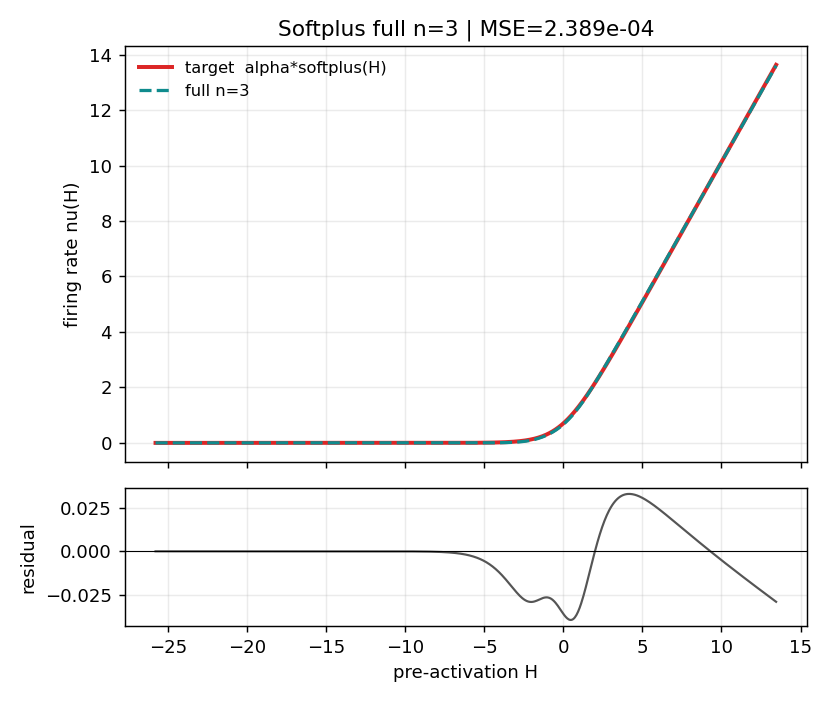}
  \caption{%
    Fully connected three-state CTMC fit to the softplus transfer curve.
    Unlike the restricted low-state CTMC used in the main experiments, this
    model permits transitions between all pairs of states, with a spike counted
    on transitions into the designated "base" state.
    \emph{Top:} target rate $\alpha\,\mathrm{softplus}(H)$ and the stationary
    spike-rate prediction over $H\in[-26,14]$.
    \emph{Bottom:} residual between the fitted and target rates.
    The residual remains within $\pm0.03$\,Hz over the evaluated range, with
    mean-squared error $2.39\times10^{-4}$.
  }
  \label{fig:softplus-fullstate-n3}
\end{figure}

\section{Baseline}
\label{app:baseline}

\begin{figure}[H]
  \centering
  \includegraphics[width=\linewidth]{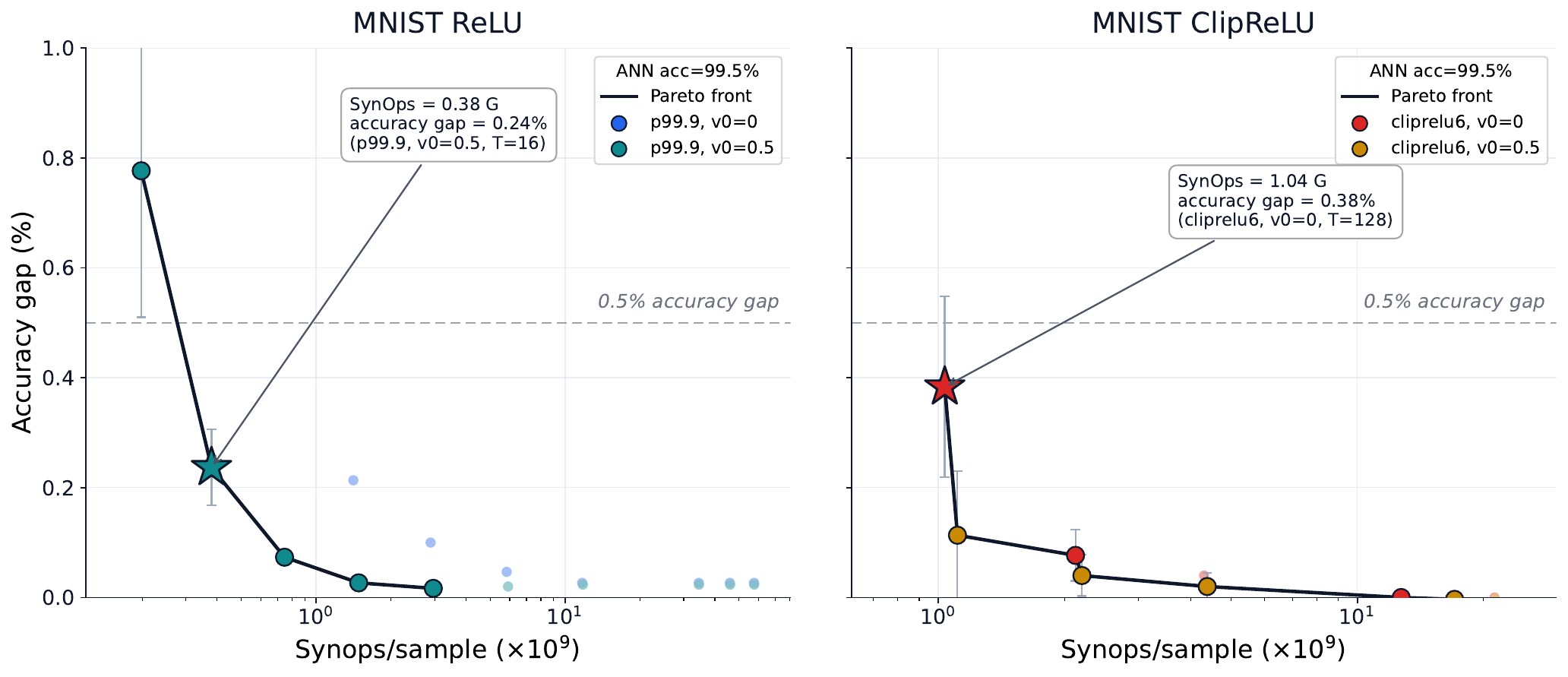}
  \caption{%
    Accuracy--cost trade-off for conventional integrate-and-fire (IF)
    conversion on VGG-11/MNIST, comparing the ReLU (left) and ClipReLU6
    (right) source networks. Each point represents one conversion
    configuration averaged over three seeds. Curves sweep the simulation length $T$ under the
    indicated normalization and initial-membrane settings
    $v_0 \in \{0,0.5\}$. The Pareto front shows the lowest attainable gap at
    each cost, with error bars denoting across-seed standard deviation. The
    dashed line marks the $0.5\%$ gap criterion, and the star marks the
    lowest-SynOps configuration satisfying it. The selected ReLU configuration
    reaches a $0.24\%$ gap at $0.38\,$G SynOps
    ($p99.9$, $v_0{=}0.5$, $T{=}16$), while ClipReLU6 reaches a $0.38\%$ gap
    at $1.04\,$G SynOps ($v_0{=}0$, $T{=}128$).%
  }
  \label{fig:if-baseline-mnist}
\end{figure}

\begin{figure}[H]
  \centering
  \includegraphics[width=0.62\linewidth]{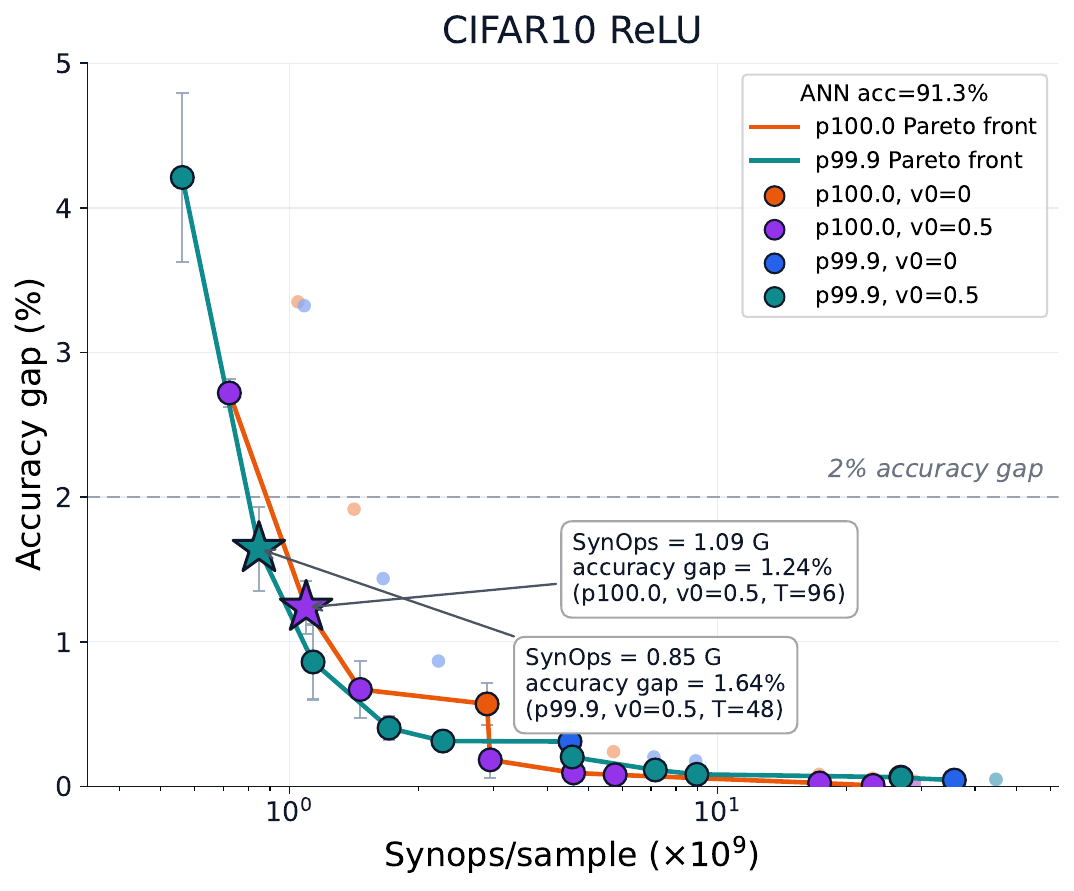}
  \caption{%
    Accuracy--cost trade-off for conventional IF conversion on
    VGG-11/CIFAR-10 with a ReLU source network, averaged over three seeds. The source ANN accuracy is $91.3\%$.
    Results are shown for $p100.0$ and $p99.9$ normalization with
    $v_0 \in \{0,0.5\}$ while sweeping the simulation length $T$. A separate
    Pareto front is shown for each normalization setting. The dashed line marks
    the $2\%$ ANN--SNN accuracy-gap criterion, and each star marks the
    lowest-SynOps configuration satisfying it. Under this criterion, the best
    $p99.9$ configuration reaches a $1.64\%$ gap at $0.85\,$G SynOps
    ($v_0{=}0.5$, $T{=}48$), while the best $p100.0$ configuration reaches a
    $1.24\%$ gap at $1.09\,$G SynOps ($v_0{=}0.5$, $T{=}96$).%
  }
  \label{fig:if-baseline-cifar10}
\end{figure}

\section{Additional Reporting Details}
\label{app:reporting}
\paragraph{Single-neuron finite-time convergence}
\begin{figure}[H]
  \centering
  \includegraphics[width=\linewidth]{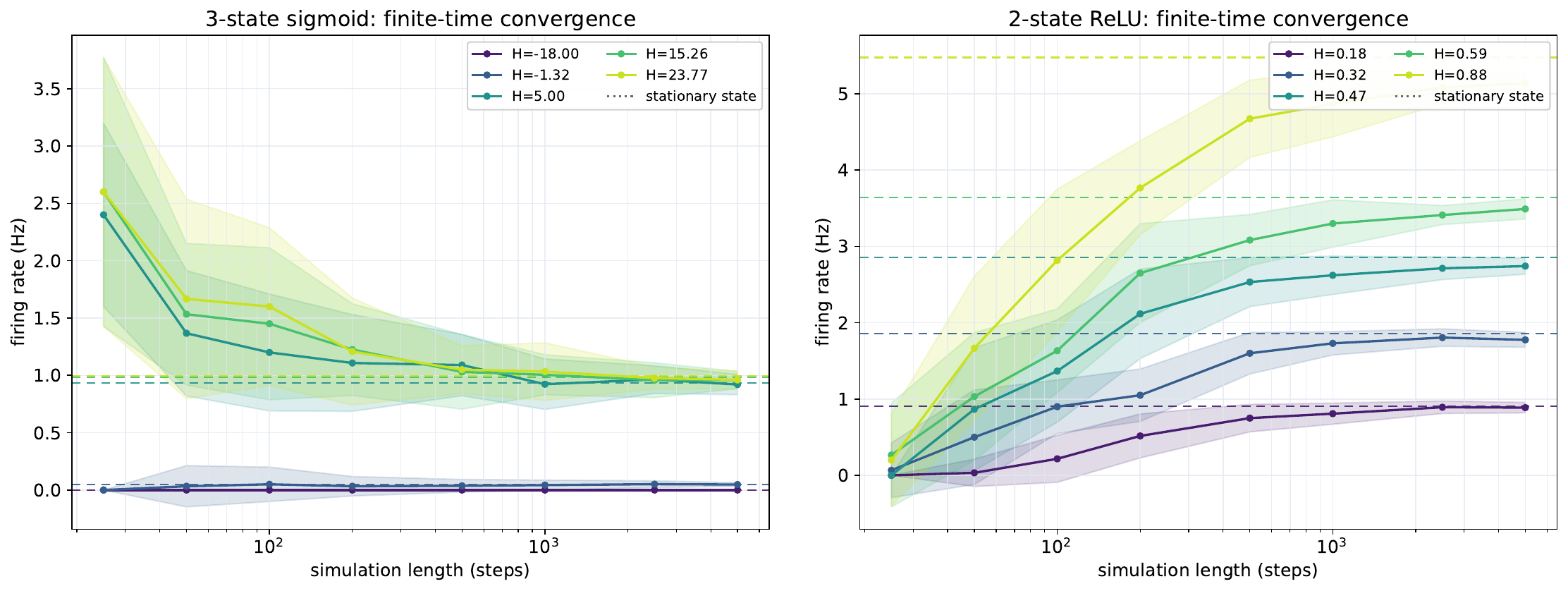}
  \caption{%
    Finite-time convergence of empirical firing rates to stationary rates for
    representative low-state CTMC neurons. Left: a 3-state sigmoid neuron evaluated at
    several input values $H$. Right: a 2-state ReLU neuron evaluated
    at several input values $H$. Shaded regions indicate variability across
    stochastic trials, and dashed horizontal lines mark the corresponding
    stationary firing rates. The results illustrate how finite simulation length
    contributes to the gap between stationary-rate predictions and actual SNN
    behavior in different time steps.
  }
  \label{fig:single-neuron-convergence}
\end{figure}
To illustrate the effect of finite simulation length, Figure~\ref{fig:single-neuron-convergence}
shows how empirical firing rates approach their stationary values for representative
2-state ReLU and 3-state sigmoid neurons.
\paragraph{Clipping interpretation.}
The MLP clipping ablation is ``clip-on-forward'': the same trained ReLU weights are evaluated after clamping every layer activation to $[0,K]$. This design controls the weights but changes both the ANN predictions and downstream preactivation distributions. Accordingly, the plotted accuracy gap answers whether the SNN tracks each altered forward map; it does not by itself show that the clipped system has higher absolute task accuracy at equal cost.

\paragraph{Deep operating points.}
The VGG-11/MNIST diagnosis labels Case I as $(T,\tau,r)=(50,0.02,3)$ and Case II as $(30,0.02,3)$. The CIFAR-10 diagnosis uses $(50,0.02,5)$ and $(40,0.02,4)$ respectively.

\paragraph{Transition-count cost.}
Figure~\ref{fig:cifar-transition-pareto} reports the corresponding
accuracy--cost trade-off on VGG-11/CIFAR-10 when CTMC state transitions,
rather than SynOps, are used as the cost measure. The qualitative trend is
consistent with the SynOps analysis: the clipped variant does not reduce
the minimum cost required to satisfy the $2\%$ accuracy-gap criterion.
\begin{figure}[t]
  \centering
  \includegraphics[width=\linewidth]{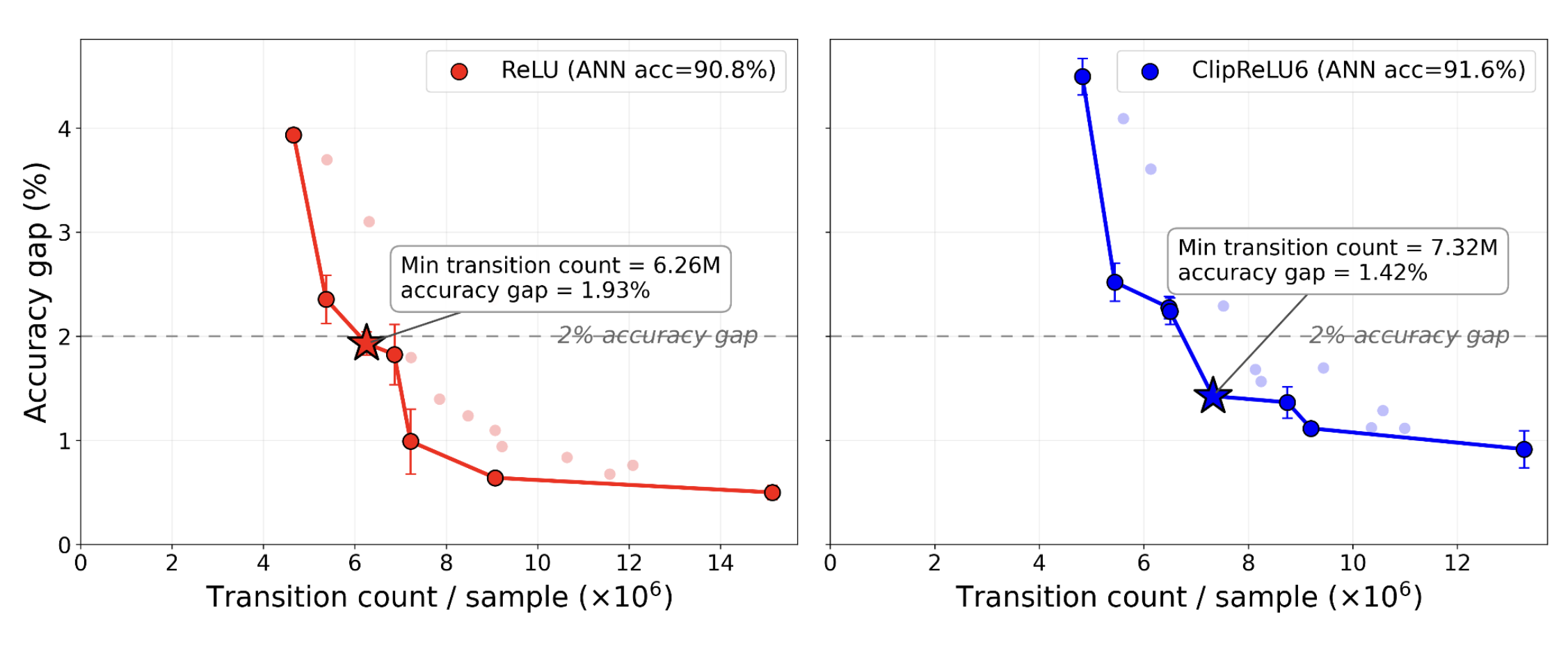}
  \caption{%
    Transition-count trade-off for CTMC conversion on VGG-11/CIFAR-10,
    comparing ReLU (left) and ClipReLU6 (right). Solid lines show the Pareto frontiers, and the minimum-transition configurations satisfying the $2\%$ accuracy-gap  criterion: ReLU
    reaches a $1.93\%$ gap at $6.26\times10^{6}$ transitions/sample, while
    ClipReLU6 reaches a $1.42\%$ gap at $7.32\times10^{6}$ transitions/sample.
  }
  \label{fig:cifar-transition-pareto}
\end{figure}
\paragraph{Uncertainty.}
For deep experiments, repeated stochastic trials share each trained ANN. Figures 5 and 6 report the SEM across the displayed stochastic runs and therefore primarily characterize run-to-run stochastic variability; trials sharing the same trained ANN are not independent training replicates.

\section{LLM Usage Disclosure}
A large language model was used to assist with editorial restructuring, prose refinement, LaTeX organization, reference-format checking, and reviewer-style critique. The authors are responsible for independently verifying every equation, citation, empirical value, interpretation, and disclosure.

\end{document}